\journal{Neural Networks}
\DeclareMathOperator*{\argmin}{arg\,min}
\newtheorem{theorem}{Theorem}
\newcommand\overstar[1]{\ThisStyle{\ensurestackMath{%
			\setbox0=\hbox{$\SavedStyle#1$}%
			\stackengine{0pt}{\copy0}{\kern.2\ht0\smash{\SavedStyle*}}{O}{c}{F}{T}{S}}}}
\begin{document}

\begin{frontmatter}

\title{A General Framework for quantifying Aleatoric and Epistemic uncertainty in Graph Neural Networks}

\author[KSU]{Sai Munikoti\corref{CorrAuth}}
\author[KSU]{Deepesh Agarwal}
\author[ETH]{Laya Das}
\author[KSU]{Balasubramaniam Natarajan}

\address[KSU]{Department of Electrical and Computer Engineering, Kansas State University, Manhattan, Kansas 66506, USA}
\address[ETH]{Reliability and Risk Engineering Lab, ETH Zurich, 8092, Zurich, Switzerland.}

\cortext[CorrAuth]{Corresponding author. Email:saimunikoti@ksu.edu}

\begin{abstract} 
Graph Neural Networks (GNN) provide a powerful framework that elegantly integrates Graph theory with Machine learning for modeling and analysis of networked data. We consider the problem of quantifying the uncertainty in predictions of GNN stemming from modeling errors and measurement uncertainty. We consider aleatoric uncertainty in the form of probabilistic links and noise in feature vector of nodes, while epistemic uncertainty is incorporated via a probability distribution over the model parameters. We propose a unified approach to treat both sources of uncertainty in a Bayesian framework, where Assumed Density Filtering is used to quantify aleatoric uncertainty and Monte Carlo dropout captures uncertainty in model parameters. Finally, the two sources of uncertainty are aggregated to estimate the total uncertainty in predictions of a GNN. Results in the real-world datasets demonstrate that the Bayesian model performs at par with a frequentist model and provides additional information about predictions uncertainty that are sensitive to uncertainties in the data and model.
\end{abstract}


\begin{keyword}
Uncertainty quantification, Graph Neural Network, Bayesian model, Assumed Density Filtering, Node classification
\end{keyword}

\end{frontmatter}

\section{Introduction}
Learning representations and relations over graphs finds applications in a wide range of networked systems such as social networks \cite{rassil2022augmented, ju2022ghnn}, biological networks \cite{liu2022minimum}, transportation networks \cite{ali2022exploiting} and communication networks \cite{jiang2021graph}. Uncertainty in measured quantities and imprecise information about the underlying structure and features of a network can pose a serious impediment to the efficiency of the learning process and quality of the resulting models. Uncertainty in estimated parameters and structure of trained model is a fundamental modeling challenge that imposes restrictions on the confidence of predictions. Learning representations in an uncertainty-aware manner is fundamental to producing robust models and reliable predictions. Models that do not account for these sources of uncertainty can be over-confident in their predictions \cite{jospin2020hands}. Moreover, neural network models are often prone to overfitting that limits their ability to generalise \cite{szegedy2013intriguing}. These factors can pose serious problems to effective utilization of the available information in the model building process as well as reliable interpretation of the model predictions under adverse situations \cite{munikoti2020bayesian,munikoti2022scalable} . Furthermore, quantification of uncertainty in predictions is also crucial for uncertainty sampling approach in active learning  \cite{gal2017deep, madhawa2020active}. Existing methods of analyzing graph structured data with Graph Neural Network (GNN) models are ill-equipped to handle uncertainty. We therefore consider the problem of quantifying the impact of different sources of uncertainty on the predictions of a GNN model.

Uncertainty quantification ubiquitously arises in modeling, and has been extensively addressed in the context of deep neural networks in computer vision \cite{maddox2019simple},  natural language processing \cite{liu2021qualifying} and robotics \cite{loquercio2020general}. It has been addressed in several ways including Laplace approximations, deep Gaussian processes and Bayesian methods \cite{abdar2021review}. A Bayesian approach to uncertainty quantification has the advantage of a principled treatment of different sources of uncertainty along with prior beliefs. Bayesian approaches have been very successful in handling uncertainty in neural networks for domain-invariant learning with uncertainty \cite{xiao2021bit}, addressing catastrophic forgetting \cite{chen2021overcoming}, imitation learning \cite{javed2021policy} and many more \cite{wang2020survey}.

In the context of GNN, uncertainty quantification and incorporation has received relatively less attention. Zhang et al \cite{zhang2019bayesian} propose a Bayesian framework where an observed graph is considered as a realization of a parametric family of random
graphs. This subsequently allows a joint inference of the graph structure and parameters of GNN from the data, resulting in a model that is resilient to noise and adversarial attacks. Pal et al. \cite{pal2019bayesian} follow up on the above idea, use a \textit{maximum a posteriori} (MAP) estimate of the graph and perform all learning tasks on the estimated graph instead of the measured graph. Since the MAP estimate corresponds to the mode of the probability density function of the graph structure, the authors consider it to inherently incorporate aleatoric uncertainty. The epistemic uncertainty in the resulting models are quantified with a Monte-Carlo dropout approach and the variance in the resulting predictions is reported \cite{zhang2019bayesian, pal2019bayesian}. While these methods adopt a Bayesian approach to mitigate the effect of uncertainty on the predictions, they consider the links and features of nodes to be deterministic and thus, do not consider the measurement uncertainty therein. In the presence of such measurement uncertainty, the GNN model propagates the true values as well as noise through all the layers, which in turn influences the model predictions. This phenomenon and its impact on the confidence of predictions is not captured in the literature. Moreover, the MAP estimates are obtained by processing the node features, which renders the approach sensitive to any uncertainty in the features. An explicit and systematic quantification of the uncertainty in predictions is also not provided.

\subsection{Contributions}
In this paper, we address the lack of systematic and explicit incorporation of different sources of uncertainty in GNNs within a Bayesian framework. We formally define the different sources of aleatoric and epistemic uncertainty in GNNs. Specifically, we consider the aleatoric uncertainty arising from (i) imprecise information about the graph structure via probabilistic links and (ii) measurement noise in feature vectors of nodes. We propagate the aleatoric uncertainty through the node embedding layers and classification layers of the GNN model via Assumed Density Filtering (ADF). We quantify the epistemic uncertainty arising from probabilistic parameters of the GNN model with Monte-Carlo sampling. The proposed framework exhibits the following advantages:
\begin{itemize}
\item Quantification of total uncertainty due to aleatoric and epistemic sources
\item Propagation of aleaotric uncertainty through all layers of a GNN model
\item Generic framework that can be applied at the time of fresh training, as well as to pre-trained networks without the need of a computationally expensive retraining process.
\end{itemize}
We present the different sources of uncertainty in GNNs and discuss the related literature on handling uncertainty in Section \ref{sec:background_related_work}. We discuss the specific problem definition targeted in this work and the proposed approach to quantify the total uncertainty in GNNs in Section \ref{sec:methodology}. Experiments and results are presented in Section \ref{sec:expts} and the article ends with concluding remarks in Section \ref{sec:conclusion}.

\section{Background and Related Work}
\label{sec:background_related_work}
Consider a graph $G$ represented as a tuple $G=(V,E)$ of a set of $n_V$ vertices/nodes $V$ and $n_E$ edges/links $E$. Each node $u_i$, $i\in[1,n_V]$ of the graph consists of $d$ features, represented as a vector $h_i=[h_{ij}]$, $j=1,2,3,...,d$. Each link $e_{ij}$, between the $i^{th}$ and $j^{th}$ nodes in the graph is associated with a weight $p_{ij}$, which signifies the strength of the link. We consider the weight to be normalised between $0$ and $1$ and interpret the weight as the probability of the corresponding link $e_{ij}$. In the following subsections, we discuss the different types and sources of uncertainty, and the existing literature on quantifying their impact on model predictions in vanilla neural networks and GNN.

\subsection{Aleatoric Uncertainty in Graph Neural Networks}
Aleatoric uncertainty refers to intrinsic randomness of the data due to noisy or inaccurate measurements. In the case of a GNN, the input data is in the form of graphs that model a real-world network. This data consists of the feature vectors $h_i$ of the nodes and probabilities/weights $p_{ij}$ of the links. The different sources of \textit{aleatoric uncertainty} in GNN are as follows:
\begin{enumerate}
    \item[AU1] Measurement uncertainty associated with feature vectors of nodes $h_{i}$, i.e., the measured feature vectors are considered as being the sum of true feature vectors ($h_i^*$) and measurement noise ($\epsilon_i$) as: $h_i=x_{i}+\epsilon_i$, with $\epsilon_i\sim p(\epsilon)$.
    \item[AU2] Structural uncertainty of the graph captured via probabilities of links $p_{ij}$, $i=1,2,3,...,n_V$, $j=1,2,3,...,n_V$.
\end{enumerate}
Uncertainty AU1 refers to the uncertainty in features of the nodes. The feature vectors can represent physical quantities such as the coverage of a cell tower in a communication network, or the functionalities of a protein in a biological network. For instance, every node (research article) in the citation dataset "Cora" is characterized by a feature vector of size $1433$, where each unit indicates the absence/presence of predefined words from the dictionary. These measured quantities can be uncertain in many scenarios because of the sensing process or imprecise information about the system. For example, in a protein network, the exact functionality of all proteins is rarely accurately known. This type of uncertainty propagates through the layers of the GNN and ultimately affects the model predictions.

Uncertainty AU2, on the other hand, represents the uncertainty in the existence of links. In the protein network scenario, the knowledge of interactions between different proteins and protein complexes is also highly uncertain, which results in probabilistic links between different nodes. Since the interactions are derived through noisy and error-prone
lab experiments, each link is associated with an uncertainty value \cite{asthana2004predicting}. For instance, a graph with $5$ nodes and binary weights of links has $2^5=32$ possible configurations. In a graph with continuous valued weights, these weights influence the extent to which information between any two nodes is exchanged and assimilated. This type of uncertainty therefore results in fundamentally different information exchange and processing through the graph.


A variety of Bayesian methods are used in the literature to handle aleatoric uncertainty in deep neural networks \cite{graves2011practical,blundell2015d,hernandez2015probabilistic, mae2021uncertainty}. However, there are very few works for GNN models. The authors in Zhang et al. \cite{zhang2019bayesian} propose a Bayesian framework using joint estimation of graph structure and GNN parameters. The authors make use of families of parametric random graphs to estimate the structure and parameters. This makes the approach sensitive to the choice of the random graph model and the extent to which the random network can accurately represent the characteristics of the true underlying network. As a result, inferences can be inconsistent for different problems and datasets. Another significant drawback of the technique is that the posterior inference of the graph is carried out solely conditioned on the observed graph. As a result, any information provided by the
node features and the training labels is completely disregarded. 
Therefore, Pal et al. \cite{pal2019bayesian} proposed an alternative approach which formulates the posterior inference of the graph in a non-parametric fashion, conditioned on the observed graph, features and training labels. Precisely, they obtain MAP estimate of graph, and conducted all the classification/regression tasks on this estimate. It is argued that MAP estimate handles aleatoric uncertainty of the input graph. However, the approach does not systematically define/quantify the sources of uncertainty and their impact on the predictions. Specifically, the uncertainties AU1 and AU2 are not considered in the framework. We address these shortcomings by explicitly incorporating AU1 and AU2 in our framework. Specifically, ADF is leveraged to propagate the aleatoric uncertainty from the input of the GNN to final node predictions through all the intermediate layers.

\subsection{Epistemic Uncertainty in Graph Neural Networks}
Epistemic uncertainty is the scientific uncertainty in the model that exists because of model in-competency to completely explain the underlying process. A GNN model $\Xi(\Theta)$ typically consists of several layers of complex aggregation and combination operations followed by feedforward processing. The different forms of \textit{epistemic uncertainty} in the context of GNNs are:
\begin{enumerate}
    \item[EU1] Parametric uncertainty in the GNN model, i.e., the parameters $\Theta$ of the GNN are assumed to be probabilistic with a probability density function $p(\Theta)$
    \item[EU2] Uncertainty in activation functions of neurons of the GNN model
\end{enumerate}

Uncertainty EU1 represents the uncertainty of the learnable parameters $\Theta$ of the GNN model, and is represented by by placing a distribution over the neural network weights. However, estimation of the posterior density function of the parameters $p(\Theta|D)$ given the data $D$ is mathematically intractable to compute for deep neural networks and is approximated by different methods. Among these methods, variational inference \cite{kingma2015variational}, and sampling-based approaches \cite{gal2016dropout} are the most effective ones.
Monte-Carlo sampling methods involve sampling the parameters from a distribution and are generally obtained using an ensemble of neural network predictions. The prediction ensemble could either be generated by differently trained networks \cite{lee2019ensemble}, or by using dropout at test-time \cite{gal2016dropout}.   

Similar to aleatoric uncertainty, the literature on handling epistemic uncertainty in GNN is limited. Zhang et al.\cite{zhang2019bayesian} and Pal et al. \cite{pal2019bayesian} are some of the few efforts that generated multiple Monte-Carlo samples by using dropout at test time. To address the problem of huge dependency of  Zhang et al.\cite{zhang2019bayesian} on the assumed random graph model, Hasan et al. \cite{hasanzadeh2020bayesian} introduces a stochastic regularization technique for GNN by adaptive connection sampling. Specifically, it adaptively learns the dropout rate for each layer of GNN. Akin to many of the aforementioned methods, we use the dropout-at-test approach to generate Monte-Carlo samples and estimate the epistemic uncertainty in predictions.

Uncertainty EU2 represents the randomness in the activation function of neurons in the GNN, and is not well considered in the literature. We also restrict our analysis to AU1, AU2 and EU1 in GNN and construct the total uncertainty from the contributions of these components.

\section{Methodology}
\label{sec:methodology}
A graph neural network typically involves two modules - node embedding and feedforward modules. The node embedding module performs aggregation and combination operations in the embedding layers of the model and produces a vector of \textit{node embeddings}. These operations capture the information propagation and processing phenomena in networked data. The feedforward module processes the node embeddings with non-linear transformations via feedforward layers and produces the final output. 
We next describe the proposed approach for quantification of the total uncertainty involved in GNN models.
\begin{figure*}[!h]
\centering
	\includegraphics[width=\textwidth]{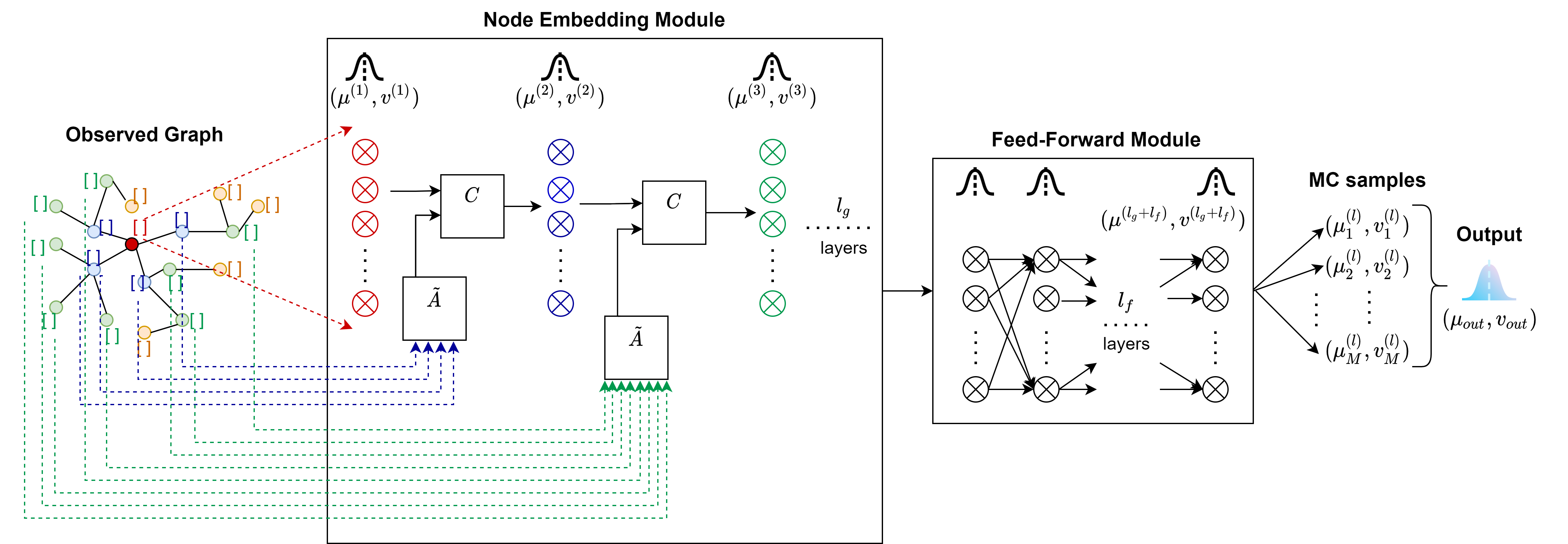}
	\caption{Proposed BGNN architecture for incorporating aleatoric and epistemic uncertainty in a GNN.}
	\label{fig:1}
\end{figure*}

\subsection{Problem definition}

We consider a network $G=(V,E)$ as described in Section 2 where nodes are associated with noisy feature vectors (AU1) and links are associated with probabilities (AU2) as follows:
\begin{enumerate}
    \item The noise in the features is considered to be Gaussian with zero mean and known variance. For a node $u$ in the network, we have:
    \begin{equation}
        \begin{split}
        h_u=h_u^*+\epsilon_u
        \\ \epsilon_u\sim\mathcal{N}(0,\Sigma_u)
        \end{split}
        \label{eq:feature_noise}
    \end{equation}
    where $h$ and $h^*$ are the measured and true feature vector respectively, $\epsilon_u$ is the noise in feature vector and $\Sigma_u$ is a diagonal matrix consisting of the known variances of noise in individual features. The noise in the features of different nodes are assumed to be uncorrelated, i.e., for any two nodes $u$ and $\nu$ in the network, we have:
    \begin{equation}
        \mathbb{E}[\epsilon_u\epsilon_\nu]=0\nonumber
    \end{equation}
    \item The probabilities $p_{u\nu}$ of links are assumed to be known \textit{a priori}.
    \item The learnable parameters $\Theta$ of the GNN are assumed to be random variables with an unknown probability density function:
    \begin{equation}
        \Theta\sim p_{\Theta}(\psi)\nonumber
    \end{equation}
\end{enumerate}
These sources of uncertainty result in a probabilistic propagation of the feature vectors through the model, and thus result in probabilistic outputs, i.e., $\hat{y}\sim p(y|h,p,\Theta)$. Obtaining the exact distribution of $\hat{y}$ is mathematically intractable. The problem of uncertainty quantification considered here is to systematically obtain the variance in the predictions because of the different sources of uncertainty. In the following subsections, we discuss how these effects are quantified in the proposed framework.

\subsection{Propagation of Aleatoric Uncertainty in GNN}
We propose a Bayesian approach to propagate the uncertainty in feature vectors through a GNN while also explicitly incorporating probabilistic links in the system. We achieve this with Assumed Density Filtering (ADF) and moment matching. We consider propagating and matching the mean and variance of the probability density function of outputs of all node embedding and feedforward layers of the model. While this can be achieved with existing approaches for feedforward layers, the following result formalises the result for node embedding layers.

\begin{theorem}
\label{thm:thm01}
The expected value ($\mu_u$) and variance ($v_u$) of the node embedding for node $u$, accounting for aleatoric uncertainty AU1 and AU2 with mean aggregation and linear activation functions are:
\begin{equation}
\mu_u^{(i)}=\theta_C^{(i)}\mu_u^{(i-1)}+\theta_A^{(i)}\frac{1}{|N(u)|}\sum_{\nu\in N(u)}p_{u\nu}\mu_{\nu}^{(i-1)}
\label{eq:0a}
\end{equation}
\begin{equation}
v_u^{{(i)}}=\theta_C^{(i)^2}v_u^{(i-1)}+\theta_A^{(i)^2}\frac{1}{|N(u)|D(u)}\sum_{\nu\in N(u)}p_{u\nu}^2v_{\nu}^{(i-1)}
\label{eq:0b}
\end{equation}
where the superscript $(i)$ represents the corresponding quantities of the $i^{th}$ node embedding layer; $\theta_C$ and $\theta_A$ represent the parameters of the combination and aggregation operations of GNN respectively; $N(u)$ and $D(u)$ represent the neighbourhood and degree of node $u$, respectively; $p_{u\nu}$ represents the probability of the link between nodes $u$ and $\nu$.
\end{theorem}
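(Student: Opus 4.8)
The plan is to derive the two identities by propagating the first two moments through a single node-embedding layer under the standard Assumed Density Filtering (ADF) / moment-matching approximation, and then to close the argument by induction over the layer index. I would begin by writing the layer update explicitly: with mean aggregation, a known (hence deterministic) link weight $p_{u\nu}$ on edge $(u,\nu)$, and a linear activation, the embedding of node $u$ at layer $i$ is
\begin{equation}
h_u^{(i)}=\theta_C^{(i)}h_u^{(i-1)}+\theta_A^{(i)}\frac{1}{|N(u)|}\sum_{\nu\in N(u)}p_{u\nu}\,h_\nu^{(i-1)},\nonumber
\end{equation}
where the only randomness on the right-hand side is that carried by the layer-$(i-1)$ embeddings, which by the inductive hypothesis have already been moment-matched to a density with mean $\mu_\bullet^{(i-1)}$ and diagonal variance $v_\bullet^{(i-1)}$. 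The base case is the input layer, where $\mu_u^{(0)}=h_u^{*}$ and $v_u^{(0)}=\mathrm{diag}(\Sigma_u)$ by Eq.~\eqref{eq:feature_noise}.

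For the mean I would take expectations of the display above and use linearity of expectation; since $\theta_C^{(i)}$, $\theta_A^{(i)}$, $p_{u\nu}$ and $1/|N(u)|$ are all non-random, this immediately produces Eq.~\eqref{eq:0a}, with no further approximation beyond what is implicit in the inductive hypothesis.

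For the variance, the essential modelling step is the ADF assumption that $h_u^{(i-1)}$ and the $\{h_\nu^{(i-1)}\}_{\nu\in N(u)}$ are treated as mutually independent with diagonal covariance, so that all cross-covariance terms are discarded. Under this assumption the variance of the sum is the sum of the variances; the contribution of $\theta_C^{(i)}h_u^{(i-1)}$, after re-diagonalising, is $\theta_C^{(i)\,2}v_u^{(i-1)}$ (coordinatewise square acting on the per-coordinate variances); the multiplicative weight $p_{u\nu}$ pulls out a factor $p_{u\nu}^2$; and the two factors of $1/|N(u)|$ from the mean aggregator combine, one of which I rewrite via the simple-graph identity $|N(u)|=D(u)$, giving the normalisation $\tfrac{1}{|N(u)|D(u)}$ and hence Eq.~\eqref{eq:0b}. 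A one-line induction on $i$ then finishes the proof for the node-embedding module, and the already-available moment-matching rules for affine maps and elementwise nonlinearities handle the subsequent feedforward layers analogously.

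The part that needs the most care is the variance computation, specifically making the independence/diagonalisation assumptions precise: writing the layer output as a linear image of a random vector, one must argue that re-matching the generally non-diagonal covariance $\theta_C^{(i)}\,\mathrm{Cov}(h_u^{(i-1)})\,\theta_C^{(i)\top}$ to its diagonal part is exactly the ``assumed density'' step, and that dropping $\mathrm{Cov}(h_u^{(i-1)},h_\nu^{(i-1)})$ together with the pairwise neighbour covariances is consistent with that same approximation. Everything else — linearity of expectation, the variance of a weighted sum of independent terms, and bookkeeping of the $1/|N(u)|$ factors — is routine.
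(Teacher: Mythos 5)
Your proposal is correct and follows essentially the same route as the paper: both rest on the ADF/moment-matching approximation in which the layer-$(i-1)$ embeddings are treated as independent with diagonal covariance, the mean follows from linearity of expectation, and the variance of the weighted mean aggregation produces the $p_{u\nu}^2$ factors and the $\tfrac{1}{|N(u)|D(u)}$ normalisation via $|N(u)|=D(u)$. If anything, you carry out explicitly the final substitution and the independence bookkeeping that the paper compresses into the remark that ``expectation of mean is mean of expectations'' and ``variance of means is the normalized form of mean of variances,'' and your explicit treatment of $p_{u\nu}$ as a deterministic weight is the interpretation consistent with the stated formulas.
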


\begin{proof}
Consider the noisy feature vector for a node $u$ of the observed graph. This feature vector, along with the probabilistic graph structure are fed as input to the GNN. 
According to Eq. (\ref{eq:feature_noise}), feature vector can be expressed as:
\begin{equation}
h_u\sim\mathcal{N}(h_u^*,\Sigma)
\label{eq:1}
\end{equation}
This random variable is processed by the node embedding layers of the GNN model via aggregation and combination operations. The aggregation operation in the $i^{th}$ layer aggregates the embeddings of the neighbouring nodes $h_{N(u)}^{(i-1)}$ in the $(i-1)^{th}$ embedding layer and is equivalent to information collection operation from neighbours in the network. The combination operation combines the aggregated embeddings with the node embedding of the node $u$ and is equivalent to assimilating information from a network. The operation performed by the $i^{th}$ node embedding layer can be expressed as \cite{hamilton2017inductive}:
\begin{equation}
\begin{split}
h_{u}^{(i)} &= f^{(i)}\left(h_{u}^{(i-1)},h_{N(u)}^{(i-1)}\right) \\
&=  g\Bigg[\theta_{C}^{(i)}h_{u}^{(i-1)} +
\theta_{A}^{(i)}\tilde{A}\Big(h_{N(u)}^{(i-1)}\Big)\Bigg]
\end{split}
\label{eq:2a}
\end{equation}
where $g[\cdot]$ represents the activation function and $\tilde{A}(\cdot)$ denotes the aggregation operation. This operation is performed recursively $l_g$ number of times for a GNN with $l_g$ node embedding layers. The embeddings generated at the $i^{th}$ layer are dependent solely on the embeddings of the $(i-1)^{th}$ layer. As a result, joint density of all embeddings generated for a node $u$, i.e., $p(h_u^{(0:l_{g})})$ can be expressed as:
\begin{equation}
\begin{split}
p\Big(h_u^{(0:l_{g})}\Big) &= p\Big(h_u^{(0)}\Big) \prod_{i=1}^{l_{g}}p\Big(h_u^{(i)}|h_u^{(i-1)}\Big) \\
p\left(h_u^{(i)}|h_u^{(i-1)}\right) &= \delta \left[h_u^{(i)}- f^{i}\left(h_u^{(i-1)},h_{N(u)}^{(i-1)}\right)\right]
    \end{split}
    \label{eq:joint}
\end{equation}
where, $\delta[\cdot]$ is the Dirac delta function. This process is shown in Figure \ref{fig:1}. Propagating the uncertainty through the node embedding layers requires obtaining the joint density function described in Eq. \eqref{eq:joint}, which is mathematically intractable.

We employ ADF to approximate the joint density function. We choose ADF because of its low computational demand for a systematic propagation of uncertainty through all layers of a neural network \cite{boyen2013tractable, gast2018lightweight}. 
ADF approximates this intractable distribution as follows (we remove the subscript $u$ from the feature vector for the sake of brevity):
\begin{equation}
\begin{split}
p\left(h^{(0:l_{g})}\right) \approx q\left(h^{(0:l_{g})}\right) = q\left(h^{(0)}\right) \prod_{i=1}^{l_{g}}q\left(h^{(i)}\right) \\
    \end{split}
    \label{eq:4}
\end{equation}
ADF makes the first approximation by assuming that the probability density of the embeddings in the different layers are independent of each other. Furthermore, $q\left(h^{(i)}\right)$ is assumed to be Gaussian, so that we have:
\begin{equation}
\begin{split}
q\left(h^{(0)}\right) = p\left(h^{(0)}\right) \\
q\left(h^{(i)}\right) =  \prod_{j = 1}^{m_{i}}\mathcal{N}\left(\mu_{j}^{(i)}, v_{j}^{(i)}\right) 
    \end{split}
    \label{eq:5}
\end{equation}
where $m_{i}$ represents the size of the embedding vector at $i^{th}$ layer of the model, $\mu_j^{(i)}$ and $v_j^{(i)}$ are the mean and variance of the $j^{th}$ element of the embedding vector $h^{(i)}$. The approximate joint density function of all node embeddings till the $i^{th}$ layer can be expressed as:
\begin{equation}
\begin{split}
\tilde{p}\left(h^{(0:i)}\right) = p\left(h^{(i)} | h^{(i-1)}\right) \prod_{j=0}^{i-1}q\left(h^{(j)}\right) 
    \end{split}
    \label{eq:6}
\end{equation}
This step replaces the conditionals in Eq. \eqref{eq:joint} by the corresponding approximations from Eq. \eqref{eq:5} to obtain an approximate density $\tilde{p}\left(h^{(0:i)}\right)$. ADF then finds the best approximate distribution $q\left(h^{(0:i)}\right)$ by minimizing the KL divergence with $\tilde{p}\left(h^{(0:i)}\right)$ as:
\begin{equation}
\begin{split}
q\left(h^{(0:i)}\right) = \argmin_{\tilde{q}(h^{(0:i)})} KL \left(\tilde{q}\left(h^{(0:i)}\right) || \tilde{p}\left(h^{(0:i)}\right)\right)
    \end{split}
    \label{eq:7}
\end{equation}
This can be solved by matching the moments between the two distributions \cite{minka2001family}. Thus, any layer $h_u^{(i)}=f^{(i)}\left(h_u^{(i-1)},h_{N(u)}^{(i-1)}\right)$ can be converted into an uncertainty propagation layer by matching first two moments as:
\begin{equation}
\begin{split}
\mu_{u}^{(i)} &= \mathbb{E}_{q\left(h_{u}^{(i-1)}\right)}\left[f^{(i)}\left(h_{u}^{(i-1)},h_{N(u)}^{(i-1)}\right)\right]  \\
\end{split}
\label{eq:8}
\end{equation}
\begin{equation}
\begin{split}
v_{u}^{(i)} &= var_{q\left(h_{u}^{(i-1)}\right)}\left[f^{(i)}\left(h_u^{(i-1)},h_{N(u)}^{(i-1)}\right)\right]  \\
    \end{split}
    \label{eq:9}
\end{equation}
where, $\mathbb{E}$ and $var$ are the expectation and variance operators respectively. When the aggregation operation $\tilde{A}$ is the mean operator, and the activation function $g(\cdot)$ is linear, substituting Eq. \eqref{eq:2a} in Eqs. \eqref{eq:8} and \eqref{eq:9} yields Eqs. \eqref{eq:0a} and \eqref{eq:0b}, and hence proves Theorem \ref{thm:thm01}. 

This makes use of the following two identities: (1) Expectation of mean is equivalent to mean of expectations; and (2) Variance of means is the normalized form of mean of variances.

\end{proof}

Equations (\ref{eq:8}) and (\ref{eq:9}) can be determined analytically for most of the functions used in neural network such as ReLu, sigmoid, convolution, etc. For instance if the function $g$ is ReLu, then the modified mean and variance are \cite{frey1999variational}:
\begin{equation}
\begin{split}
\hat{\mu_{u}}^{(i)}\left(\mu_{u}^{(i)}, v_{u}^{(i)}\right)  = \mu_{u}^{(i)} \Phi \left(\frac{\mu_{u}^{(i)}}{\sigma_{u}^{(i)}}\right) + \sigma_{u}^{(i)} \phi\left(\frac{\mu_{u}^{(i)}}{\sigma_{u}^{(i)}}\right) 
\end{split}
    \label{eq:11a}
\end{equation}
\begin{equation}
\begin{split}
\hat{v_{u}}^{(i)}\left(v_{u}^{(i)}, v_{u}^{(i)}\right)  & = \left(\mu_{u}^{(i)} + v_{u}^{(i)}\right) \Phi \left(\frac{\mu_{u}^{(i)}}{\sigma_{u}^{(i)}}\right) + \\ 
& \hspace{0.6cm} \sigma_{u}^{(i)}\mu_{u}^{(i)} \phi\left(\frac{\mu_{u}^{(i)}}{\sigma_{u}^{(i)}}\right) - \left(\hat{\mu_{u}}^{(i)}\right)^2
    \end{split}
    \label{eq:11b}
\end{equation}
where, $\sigma_{u}^{(i)} = \sqrt{\biggl( v_{u}^{(i)} 
\biggr)}$, $\Phi$ and $\phi$ are the cumulative normal and standard normal distributions, respectively. Basically, Eqs. (\ref{eq:11a}) and (\ref{eq:11b}) are recursive formulae to compute mean $\left(\mu_{u}^{(i)}\right)$ and uncertainty $\left(v_{u}^{(i)}\right)$ of the embeddings, given the parameters of the  embedding distribution $q\left(h^{(i-1)}\right)$ in previous layer.

Typically, node embeddings from GNN are fed to feed-forward layers for classification/regression task. Therefore,  $\mu_{u}^{(l_{g})}$, $v_{u}^{(l_{g})}$  serve as an input to feed-forward layers, and mean and variance is propagated in a similar way as shown in Gast \& Roth \cite{gast2018lightweight}, Loquercio et al. \cite{loquercio2020general}. In a nutshell, ADF reshapes the forward pass of a GNN to generate not only output predictions $\mu_{u}^{(l)}$, but also their respective aleatoric uncertainties $v_{u}^{(l)}$. This is achieved by considering two values per dimension of both embeddings in GNN layers as well as neural units in feed-forward layers.  

\subsection{Propagation of Epistemic uncertainty in GNN}
Epistemic uncertainty, also known as model uncertainty refers to the model confidence on its prediction. This uncertainty arises because of the single adoption of weights out of many combinations that can attain same loss values on training data. This is usually captured by assuming a probability distribution for neural network weights rather than a scalar value. However, computation of this distribution $p(\omega|X,y)$ is usually intractable. Therefore, MC based approaches have been used to obtain different weight samples by using dropout at test time \cite{srivastava2014dropout, gal2016dropout, kendall2017uncertainties}. Specifically, in our case, epistemic uncertainty is the variance of $M$ MC samples obtained via different dropout masks as shown below:
\begin{equation}
\begin{split}
p(\omega|X,y) \approx q(\Theta; \phi ) &= \text{Bernoulli}(\Theta;\phi) \\
\sigma^{2}_{\text{model}} &= \frac{1}{T} \sum_{t=1}^{M}(y_{t}-\hat{y})^2
\end{split}
\label{eq:12}
\end{equation}
where, $\{ y_{t} \}_{t}^{M}$ is a set of M sampled outputs for different weight instances from the distribution $\omega^{t} \sim q(\omega, \phi)$ and $\hat{y}=\frac{1}{T}\sum_{t}y_{t}$. Authors in Gal et al.\cite{gal2016dropout} have shown that the optimal dropout rate $\phi$ for the computation of $\sigma_{\text{model}}$ is same as training dropout rate. 

\subsection{Total uncertainty in GNN}
Total variance of GNN predictions $y$ for a sample node with feature vector $X$ corrupted by noise variance $v^{0}$ can be written as:
\begin{equation}
\begin{split} 
\sigma_{tot} = \frac{1}{T} \sum_{t=1}^{T} v_{t}^{(L)} + \left(\mu_{t}^{(L)} - \hat{\mu}\right)^2, \\
\text{where,} \quad \hat{\mu} = \frac{1}{T} \sum_{t=1}^{T} \mu_{t}^{(L)}.
\label{eq:13}
\end{split}
\end{equation}

The first term $\left(v_{t}^{(L)}\right)$ in denotes aleatoric variance and the second term $\left(\mu_{t}^{(L)} - \hat{\mu}\right)^2$ represents the model uncertainty from \textit{M} MC predictions. $L=l_{g}+l_{f}$ is the total number of layers in GNN. Thus, the first part of total variance captures ensembles of propagated variance and the second part handles the ensembles of mean prediction, thereby, addressing both  aleatoric and epistemic uncertainty.     
The overall algorithm to compute total uncertainty can be summarized in following steps:
(i) Transform GNN into a bayesian network by associating mean and variance to each embedding vector and neuron unit; (ii) Obtain $M$ mean and variance predictions by forwarding $\left(X, v^{0}\right)$ to network with weights $\omega^{t}$ sampled from $q(\omega, \phi)$; (iii) Compute output predictions and its variances according to Eq. (\ref{eq:13}). 

\section{Experiments}
\label{sec:expts}
We apply our method to three standard datasets, namely, Cora, Amazon Computers and PubMed, with varying number of nodes, links, features and classes. The details of these datasets are as follows:
\begin{enumerate}
    \item \textbf{Cora:} This is a citation graph of scientific publications \cite{datasetsRef01}. Each node in this network represents a publication with binary features that indicate the presence/absence of different words from the dictionary. The edges represent citation links between different publications.  
    \item \textbf{Amazon Computers:} This is a subgraph of the Amazon co-purchase graph \cite{amazonCompRef01}, where each node represents a product and two nodes are connected by an edge if those two products are frequently bought together.
    \item \textbf{PubMed:} This is a network of scientific publications from PubMed database pertaining to diabetes \cite{datasetsRef01}. Each node is described by a TF/IDF weighted word vector from a dictionary and the undirected edges correspond to the citation links.
\end{enumerate}
These datasets encompass a wide range of graph-theoretic properties, i.e., numbers of nodes and links, dimension of feature vectors, average degree of nodes and average clustering coefficient. These properties of the datasets are summarized in Table \ref{tab:dataDescrip01}.
\begin{table}[h!]
\centering
\caption{Summary of the datasets considered. Avg. Deg.: Average Degree, Avg. CC: Average clustering coefficient.}
\label{tab:dataDescrip01}
\resizebox{\columnwidth}{!}{%
\begin{tabular}{|l|c|c|c|c|c|c|}
\hline
\textbf{Dataset} & \textbf{Nodes} & \textbf{Links} & \textbf{Features} & \textbf{Classes} & \textbf{Avg. Deg.} & \textbf{Avg. CC} \\ \hline
Cora & 2708 & 5429 & 1433 & 7 & 4.00 & 0.24 \\ \hline
\begin{tabular}[c]{@{}l@{}}Amazon\\ Computers\end{tabular} & 13752 & 491722 & 767 & 10 & 36.74 & 0.35 \\ \hline
PubMed & 19717 & 44338 & 500 & 3 & 6.34 & 0.06 \\ \hline
\end{tabular}
}
\end{table}

In order to test the effectiveness and generalizability of the method, we address the node classification task in the three networks with $3$, $7$ and $10$ classes. We compare the method with the state-of-the-art in the literature \cite{pal2019bayesian} and highlight the stark contrast in computational efficiency and quantification of uncertainty. We also demonstrate this ability of the model to capture the uncertainty via a sensitivity study.
It is important to note that the training of GNN is accomplished with standard cross entropy loss function that solely involves the mean prediction. As a part of future work, both mean and variance will be incorporated in the loss function via conditional log likelihood. This will allow using the information about estimated uncertainties for improving model performance/robustness, rather than just quantifying it.

\subsection{Baselines}
We compare the performance of the proposed method with Bayesian Graph Convolutional Network (BGCN) proposed by \cite{pal2019bayesian}. This work captures the aleatoric uncertainty through MAP estimation of the network structure and quantifies the epistemic uncertainty with Monte-Carlo sampling. However, uncertainties AU1 and AU2 are not explicitly incorporated. Moreover, the MAP estimation is dependent on accurate knowledge of node features, and hence the method is incapable of handling AU1. Finally, the approach is computationally expensive because of the MAP estimation step. These drawbacks are addressed with our method through explicit incorporation of uncertainty with the added advantage of reduced computational complexity.

\subsection{Experimental Setup}
\label{sec:appnd_B}
All training and evaluation experiments are performed on a computer with Intel i9-4820K processor running at 3.70GHz with 8 cores, one Nvidia RTX 2080 Ti GPU with 12 GB memory, and 64 GB RAM. The implementation is performed with the help of deep graph library in PyTorch. Evaluation is repeated $100$ times and average of metrics are reported for each dataset described in Section \ref{sec:main_resu}.

GNN is trained with the GraphSAGE algorithm \cite{hamilton2017inductive}. The detailed architecture of the GNN is as follows:
Depth i.e., no. of node embedding modules: 2; no. of neurons in $2$ layers: 64, 32; no. of Multi-layer perceptron (MLP) layers: 3; no. of neurons in MLP layers: 12,8,1; Activation function: Linear (except last layer with softmax); Aggregation function: Mean. 

The data is divided into training, validation and testing subsets by randomly masking $70\%$, $10\%$ and $20\%$ of the nodes respectively. Since the core task of the GNN is node classification, the node embeddings are concatenated with feedforward layers to provide class probabilities. The loss function is categorical cross entropy with ADAM optimizer.

The training is carried out in a mini-batch manner. The batch size is set to 50, the learning rate
is set to 0.001 with a dropout rate of 0.1. The models are trained 50 epochs in total.

\subsection{Sources of Uncertainty}
We introduce uncertainty in nodes feature (AU1) by adding Gaussian white noise to the true feature values with zero mean and a known variance as shown in Eq. (\ref{eq:feature_noise}). The proposed method is compared with the baseline with different levels of noise. The variance of noise is also varied to highlight the ability of the proposed method to capture the impact of this noise as it propagates through the model. We introduce uncertainty in links (AU2) with probability of nodes. These probabilities are not available in the datasets for all links in the networks. We therefore perform link prediction in a supervised manner following the approach presented by Zhang \& Chen \cite{zhang2018link} and obtain the probabilities of links. These predicted probabilities are then used for training Bayesian models in the proposed method. We introduce uncertainty in parameters of GNN (EU1) with a Bernoulli distribution of parameters according to Eq. (\ref{eq:12}).


\subsection{Results}
\label{sec:main_resu}

The results demonstrate the computational efficiency of the proposed approach, the ability to reflect different levels of uncertainty in predictions and generalizability of the approach. The source code to regenerate all the results can be accessed at this \href{https://github.com/saimunikoti/Uncertainty-BGNN}{link}. 

\subsubsection{Adequacy of BGNN}
We compare the proposed approach in this paper with Pal et al. \cite{pal2019bayesian}. To the best of authors' knowledge, Pal et al. \cite{pal2019bayesian} is the only work in the literature that deals with aleatoric uncertainty in GNN. A summary of results evaluated on Cora dataset based on 100 MC runs obtained via different dropout masks is presented in Table \ref{tab:perfComp_res01}. The results in Table \ref{tab:perfComp_res01} show that the proposed method yields higher classification accuracy as compared to the baseline BGCN in all the cases of input variance. In this work, we specify input variance as the percentage of mean features across all nodes in the dataset. The similar trends were observed in PubMed and Amazon Computers datasets as well. This demonstrates that learning MAP estimate of the input graph does not add much value in quantifying uncertainty related to noisy node feature vectors and link weights. On the other hand, the proposed method systematically propagates uncertainties through all the layers of GNN, as discussed in the forthcoming subsections.

\begin{table}[h!]
\centering
\caption{Performance comparison of BGCN and BGNN on Cora dataset (average of 100 MC runs). Standard deviation is shown underneath the average classification accuracy.}
\label{tab:perfComp_res01}
\resizebox{\columnwidth}{!}{%
\begin{tabular}{|c|cc|}
\hline
\multirow{1}{*}{\textbf{Input Variance}} &
\multicolumn{2}{c|}{\textbf{Classification Accuracy (\%)}} \\ \cline{2-3} 
 & \multicolumn{1}{c|}{\textbf{BGCN}} & \textbf{BGNN (ours)} \\ \hline
0.0\% & \multicolumn{1}{c|}{97.71 $\pm$ 0.0} & 97.96 $\pm$ 0.0 \\ \hline
2.5\% & \multicolumn{1}{c|}{89.25 $\pm$ 0.00357} & 90.21 $\pm$ 0.00311 \\ \hline
5.0\% & \multicolumn{1}{c|}{76.16 $\pm$ 0.00966} & 78.11 $\pm$ 0.0088 \\ \hline
12.0\% & \multicolumn{1}{c|}{58.60 $\pm$ 0.01153} & 61.22 $\pm$ 0.0108 \\ \hline
\end{tabular}
}
\end{table}

\subsubsection{Effectiveness of BGNN}
We demonstrate the effectiveness of the proposed approach by selecting a few samples from the datasets randomly, and examining the variations in class probabilities with changes in input variance. Table \ref{tab:longStud_res02} presents a summary of the results for three random samples, one from each class of the PubMed dataset. Firstly, it can be seen that the class probabilities corresponding to the true class decreases with increase in the levels of input variance. This is intuitive as a higher amount of input variance will introduce more uncertainty in the network, specifically in the node embeddings of GNN layers, thereby leading to reduction in class probabilities (i.e., moving towards a more uniform distribution). Secondly, in some cases like Sample ID 28, increasing levels of input variance may lead to mis-classification. This is because the prediction probability corresponding to the true class in no variance case is much lesser as compared to other samples. Finally, the total propagated variance at the output is observed to increase with increase in input variance across all samples. These examples illustrate the systematic propagation of uncertainties across all layers of GNN for different cases of input variance.   

\begin{table}[h!]
\centering
\caption{Longitudinal analysis of a few test samples selected at random from PubMed dataset. The values of prediction probabilities correspond to the class probabilities of true class for the respective samples. The values in bold indicate the misclassification at corresponding variance levels. The values of total propagated variance are the averages of total propagated variances across all classes.}
\label{tab:longStud_res02}
\resizebox{\columnwidth}{!}{%
\begin{tabular}{|c|c|cccccccc|}
\hline
\multirow{1}{*}{\textbf{\begin{tabular}[c]{@{}c@{}}Sample\\ ID\end{tabular}}} & \multirow{1}{*}{\textbf{\begin{tabular}[c]{@{}c@{}}True\\ Class\end{tabular}}} & \multicolumn{8}{c|}{\textbf{Metrics across different levels of input variance}} \\ \cline{3-10} 
 &  & \multicolumn{4}{c|}{\textbf{Prediction Probabilities}} & \multicolumn{4}{c|}{\textbf{Total propagated variance}} \\ \cline{3-10} 
 &  & \multicolumn{1}{c|}{\textbf{0.0\%}} & \multicolumn{1}{c|}{\textbf{2.5\%}} & \multicolumn{1}{c|}{\textbf{5.0\%}} & \multicolumn{1}{c|}{\textbf{12.0\%}} & \multicolumn{1}{c|}{\textbf{0.0\%}} & \multicolumn{1}{c|}{\textbf{2.5\%}} & \multicolumn{1}{c|}{\textbf{5.0\%}} & \textbf{12.0\%} \\ \hline
12 & 1 & \multicolumn{1}{c|}{0.923} & \multicolumn{1}{c|}{0.835} & \multicolumn{1}{c|}{0.767} & \multicolumn{1}{c|}{0.578} & \multicolumn{1}{c|}{0.0003} & \multicolumn{1}{c|}{0.0031} & \multicolumn{1}{c|}{0.0062} & 0.0166 \\ \hline
13 & 2 & \multicolumn{1}{c|}{0.775} & \multicolumn{1}{c|}{0.716} & \multicolumn{1}{c|}{0.689} & \multicolumn{1}{c|}{0.635} & \multicolumn{1}{c|}{0.0031} & \multicolumn{1}{c|}{0.0081} & \multicolumn{1}{c|}{0.0119} & 0.0218 \\ \hline
28 & 3 & \multicolumn{1}{c|}{0.551} & \multicolumn{1}{c|}{\textbf{0.353}} & \multicolumn{1}{c|}{\textbf{0.262}} & \multicolumn{1}{c|}{\textbf{0.131}} & \multicolumn{1}{c|}{0.0023} & \multicolumn{1}{c|}{0.0034} & \multicolumn{1}{c|}{0.0044} & 0.0063 \\ \hline
\end{tabular}
}
\end{table}

\subsubsection{Generalizability of BGNN}
The generalizability of proposed approach is examined by performing experiments over graphs of different sizes and characteristics. The model performance is evaluated using classification accuracy. Owing to the absence of ground truth for variance assessment, it is evaluated using average per-class negative log likelihood (NLL) \cite{kendall2017uncertainties,gast2018lightweight,loquercio2020general}. The value of NLL for a specific class is evaluated as:
\begin{equation}
\begin{split}
NLL = \frac{1}{2} \log(\sigma_{\text{tot}}) + \frac{1}{2\sigma_{\text{tot}}}(y-\hat{y})^2
\label{eq:14}
\end{split}
\end{equation}
where, $\hat{y}$ is the mean prediction of class probabilities across $100$ MC runs and $y$ is a 0/1 value indicating whether the given node belongs to a specific class. $\sigma_{\text{tot}}$ is the total variance comprising of propagated input variance and that due to model uncertainty. Tables \ref{tab:Cora_res_01}, \ref{tab:amazcomp_res_01} and \ref{tab:pubmed_res_01} depict the metrics values for Cora, Amazon Computers and PubMed datasets, respectively. These values are obtained based on the average of $100$ MC runs, and mean of per-class NLL is reported. It can be observed for all the datsets that the mean classification accuracy of the model decreases with the increase in the input variance. It is intuitive in a sense that as variance in the input feature vector increases, it consistently becomes hard for the model to uniquely identify nodes with the node embeddings and thereby their labels. This idea is also reinforced by the increasing values of prediction loss observed with increase in input variance. 

The total variance propagated at output ($\sigma_{\text{tot}}$) in all the cases is also indicated in Tables \ref{tab:Cora_res_01} - \ref{tab:pubmed_res_01}. It can be seen that $\sigma_{\text{tot}}$ lies between $0$ and $1$ in all the cases of datsets considered in this work. Therefore, the first term in right hand side (RHS) of eq. (\ref{eq:14}) will always be negative. If the values of $\sigma_{\text{tot}}$ are relatively higher, as in Cora (Table \ref{tab:Cora_res_01}) and Amazon Computers (Table \ref{tab:amazcomp_res_01}) datasets, the values of first term in eq. (\ref{eq:14}) dominate, the second term will not be positive enough and consequently, the overall NLL values turn out to be negative. In these cases, the NLL values increase with increase in input variance as $\log(\sigma_{\text{tot}})$ is a monotonically increasing function. This is clearly evident from the NLL values in Tables \ref{tab:Cora_res_01} and \ref{tab:amazcomp_res_01}. On the other hand, if the values of $\sigma_{\text{tot}}$ are relatively lower, as in PubMed (Table \ref{tab:pubmed_res_01}) dataset, the values of second term in eq. (\ref{eq:14}) dominate and the overall NLL values are positive. In such cases, the NLL values decrease with increase in input variance, as observed in Table \ref{tab:pubmed_res_01}. Thus, NLL demonstrates the high quality estimates of uncertainty without changing or re-training the GNN.

\begin{table}[h!]
\centering
\caption{Results for Cora dataset (average of $100$ MC runs). Input variance is specified as percentage of mean features across all nodes in the dataset.}
\label{tab:Cora_res_01}
\resizebox{\columnwidth}{!}{%
\begin{tabular}{|c|c|c|c|c|}
\hline
\textbf{\begin{tabular}[c]{@{}c@{}}Input \\ Variance\end{tabular}} & \textbf{\begin{tabular}[c]{@{}c@{}}Classification \\ Accuracy\end{tabular}} & \textbf{\begin{tabular}[c]{@{}c@{}}Prediction \\ Loss\end{tabular}} & \textbf{\begin{tabular}[c]{@{}c@{}}Avg. per \\ class NLL\end{tabular}} & \textbf{\begin{tabular}[c]{@{}c@{}}Variance \\ propagated \\ at Output\end{tabular}} \\ \hline
0.0\% & 97.96\% & 0.19 & - & - \\ \hline
2.5\% & 90.21\% & 0.45 & -0.98 & 0.12 \\ \hline
5.0\% & 78.11\% & 0.76 & -0.65 & 0.24 \\ \hline
12.0\% & 61.22\% & 1.64 & -0.23 & 0.57 \\ \hline
\end{tabular}
}
\end{table}

\begin{table}[h!]
\centering
\caption{Results for Amazon-copurchase computer dataset}
\label{tab:amazcomp_res_01}
\resizebox{\columnwidth}{!}{%
\begin{tabular}{|c|c|c|c|c|}
\hline
\textbf{\begin{tabular}[c]{@{}c@{}}Input \\ Variance\end{tabular}} & \textbf{\begin{tabular}[c]{@{}c@{}}Classification \\ Accuracy\end{tabular}} & \textbf{\begin{tabular}[c]{@{}c@{}}Prediction \\ Loss\end{tabular}} & \textbf{\begin{tabular}[c]{@{}c@{}}Avg. per \\ class NLL\end{tabular}} & \textbf{\begin{tabular}[c]{@{}c@{}}Variance \\ propagated \\ at Output\end{tabular}} \\ \hline
0.0\% & 82.56\% & 0.599 & - & - \\ \hline
2.5\% & 81.88 \% & 0.624 & -1.58 & 0.0137 \\ \hline
5.0\% & 80.79 \% & 0.6435 & -1.47 & 0.0246 \\ \hline
12.0\% & 78.26 \% & 0.6938 & -1.24 & 0.0549 \\ \hline
\end{tabular}
}
\end{table}

\begin{table}[h!]
\centering
\caption{Results for PubMed dataset.}
\label{tab:pubmed_res_01}
\resizebox{\columnwidth}{!}{%
\begin{tabular}{|c|c|c|c|c|}
\hline
\textbf{\begin{tabular}[c]{@{}c@{}}Input \\ Variance\end{tabular}} & \textbf{\begin{tabular}[c]{@{}c@{}}Classification \\ Accuracy\end{tabular}} & \textbf{\begin{tabular}[c]{@{}c@{}}Prediction \\ Loss\end{tabular}} & \textbf{\begin{tabular}[c]{@{}c@{}}Avg. per \\ class NLL\end{tabular}} & \textbf{\begin{tabular}[c]{@{}c@{}}Variance \\ propagated \\ at Output\end{tabular}} \\ \hline
0.0\% & 84.00\% & 0.40 & - & - \\ \hline
2.5\% & 82.83\% & 0.46 & 10.75 & 0.0029 \\ \hline
5.0\% & 80.70\% & 0.52 & 6.83 & 0.0046 \\ \hline
12.0\% & 76.03\% & 0.70 & 3.90 & 0.0092 \\ \hline
\end{tabular}
}
\end{table}

\section{Conclusions}
\label{sec:conclusion}
In this paper, we propose a generic framework for incorporating aleatoric and epistemic uncertainty in GNN.
The aleatoric uncertainty arising from imprecise information about graph structure (probabilistic links) and node features is propagated via ADF. 
On the other hand, epistemic uncertainity arising from the probabilistic parameters of GNN model is quatified through MC sampling.
The proposed method, BGNN, systematically propagates these uncertainites through the layers of GNN to final predictions without the need of retraining.
Furthermore, this method is agnostic to network architecture, algorithm and the learning tasks.
Experimental results show that BGNN achieves superior performance in quantifying unceratinites for different levels of input noise across several types of graphs.
The future extension of this work shall focus on leveraging this knowledge of propagated uncertainty to modify training objectives and thereby, improve model performance and robustness.

\section*{Acknowledgement}
This material is based upon work supported by National Science Foundation under award number $1855216$.

\bibliographystyle{elsarticle-num}
\bibliography{reference}

\end{document}